\titlespacing*{\section}{0pt}{12pt plus 4pt minus 2pt}{6pt plus 2pt minus 2pt}
\Crefname{section}{Section}{Sections}
\Crefname{table}{Table}{Tables}
\Crefname{figure}{Figure}{Figures}
\theoremstyle{plain}
\theoremstyle{definition}
\theoremstyle{remark}
\title{Learning Moderately Input-Sensitive Functions:\\ A Case Study in QR Code Decoding}
\author[1]{Kazuki Yoda}
\author[1]{Kazuhiko Kawamoto}
\author[1,2]{Hiroshi Kera\thanks{Corresponding author. Email: \href{mailto:kera@chiba-u.jp}{kera@chiba-u.jp}}}
\affil[1]{Chiba University}
\affil[2]{Zuse Institute Berlin}
\date{}
\begin{document}
\maketitle

\begin{abstract}
The hardness of learning a function that attains a target task relates to its input-sensitivity. For example, image classification tasks are input-insensitive as minor corruptions should not affect the classification results, whereas arithmetic and symbolic computation, which have been recently attracting interest, are highly input-sensitive as each input variable connects to the computation results. 
This study presents the first learning-based Quick Response (QR) code decoding and investigates learning functions of medium sensitivity.
Our experiments reveal that Transformers can successfully decode QR codes, even beyond the theoretical error-correction limit, by learning the structure of embedded texts. They generalize from English-rich training data to other languages and even random strings. Moreover, we observe that the Transformer-based QR decoder focuses on data bits while ignoring error-correction bits, suggesting a decoding mechanism distinct from standard QR code readers.
\end{abstract}

\section{Introduction}\label{submission}
Over a decade, deep learning has shown remarkable success in learning \textit{input-insensitive} functions that realize the target tasks. Indeed, in most standard tasks, such as image classification, object detection, document summarization, and speech recognition, a slight change in their input (e.g., image perturbation, a few typos) is supposed to have little impact on the output (e.g., classification, document summary). Consequently, for example, data augmentation techniques such as rotation, flipping, and cropping of images enhance the insensitivity (more commonly \textit{robustness}), which leads to substantial performance increases in these tasks.

Recent studies in arithmetic and symbolic computation further explore the learning of high-sensitivity functions with Transformer models~\cite{Transformer}. Examples include integer and modular arithmetic~\cite{grokking,Order}, symbolic integration~\cite{Symbolic}, Lyapunov function design~\cite{lyapunov}, Gr\"obner basis computation~\cite{Grobner,border_basis}, and so on~\cite{LWE,LWE3,LWE2,gcd}.
In such tasks, changing a single number, coefficient, variable, or operator in the input can immediately change the output, and thus, the target functions to learn are highly sensitive.
Although the theoretical difficulty of learning such functions has suggested their hardness~\cite{failure, Sensitive}, recent empirical and theoretical studies have shown that high-sensitivity functions can be learned through techniques such as normalization, weight decay, and autoregressive generation~\cite{normalize,weight_decay,Parity}.
As such, the sensitivity of the target function plays a critical role in determining the difficulty of learning, offering new insights into the capabilities of deep learning models.

\begin{figure}[t]
    \centering
    \includegraphics[width=\linewidth]{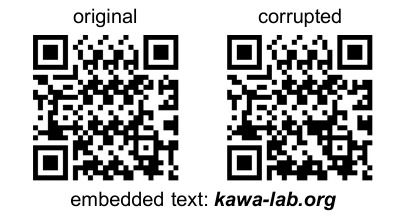}
    \vspace{-15pt}
    \caption{Example of a severely corrupted QR code successfully decoded by the Transformer. ``original'' denotes the QR code before corruption, and ``corrupted'' refers to the QR code after applying a 20-bit error. Standard QR code readers fail to decode the corrupted QR code, but the Transformer succeeds.
    The readers are recommended to try a QR code scan using their smartphone.}
    \label{fig:sample_fig1}
\end{figure}

In this study, we examine the intermediate case---learning medium-sensitivity functions---through the decoding task of Quick Response codes (QR codes;~\citet{ISO}) using a Transformer.
QR codes are designed to be tolerant (i.e., insensitive) to corruptions from image capture or physical damage, but decoding them should sensitively capture the change in the plain texts (i.e., embedded strings, such as URLs).
Such characteristics place QR code decoding between the low-sensitivity case of classical tasks and the high-sensitivity case of arithmetic and symbolic tasks. 
QR codes have several error-correction levels and embedding capacities through which we can control their sensitivity. 
To the best of our knowledge, no prior research has addressed learning-based QR code decoding. 
There have been many studies for better QR code detection and scanning as a computer vision task~\cite{CNN-QR,Angle-QR,Deblur,Faster_R-CNN,SRGAN,ESR-GAN,Reconstruction,LightWeight}, but these are orthogonal to our context.

Our experiments demonstrate a striking success of Transformer-based decoding under corruptions---the success rate of decoding under corruptions exceeds the theoretical limit, outperforming the standard decoding protocol. \cref{fig:sample_fig1} shows a case where a standard QR code reader fails to decode the QR code due to severe corruption, whereas the Transformer successfully decodes it. The datasets for training and evaluation consist of QR codes that embed domain names combining random English words and the top 1,000,000 domain names from the Tranco~\cite{Tranco}. Transformers can retrieve domain names from QR codes even under severe corruptions because of the intrinsic structure of natural language words (e.g., consonants and vowels roughly appear alternatingly in a word). Surprisingly, the trained Transformer generalizes to non-English words and even random alphabetic strings. Interestingly, the least successful case is the generalization to domain names with irregular words (e.g., ``freedoz'' instead of ``freedom''). In such a case, the Transformer model outputs spell-checked words (i.e., ``freedom''), which are incorrect.
We also investigated the sensitivity of the functions learned by the Transformer.
The Transformer successfully learned functions with four different sensitivity levels, each determined by the error-correction level, demonstrating its ability to decode QR codes.
In experiments where the location of errors was controlled, we found that the Transformer learned a function insensitive to the error-correction bits.
These findings suggest that the Transformer performs error correction using a mechanism distinct from standard QR code readers.

To summarize, this study addresses a novel QR code decoding task as a showcase of learning medium-sensitivity functions. We empirically obtain the following results through extensive experiments on domain name datasets:
\begin{itemize}[itemsep=5pt, parsep=0pt, topsep=0pt]
    \item Transformer-based decoding empirically attains a high success rate with low corruption and maintains moderate success even when the corruption magnitude exceeds the theoretical limit, which we newly derived for analysis, of error correction.
    \item Transformers learn natural language structure from English-rich datasets and generalize not only to other languages but even to random alphabetic strings.
    \item Transformers learn a function sensitive to information essential for decoding while remaining insensitive to redundant information.
    This tendency suggests that the Transformer performs error correction through a mechanism different from standard QR code readers, which rely explicitly on redundant information.
\end{itemize}

\section{Related Work}
\paragraph{Learning High-Sensitivity Functions.}
Transformers are widely used and excel at learning low-sensitivity functions like image classification~\cite{ViT}, object detection~\cite{Co-DETR}, and speech recognition~\cite{Speech}. 
Recent work shows Transformers can handle high-sensitivity tasks such as arithmetic and symbolic computation~\cite{Symbolic,Algebra,LWE,LWE3,LWE2,gcd,lyapunov,Grobner,border_basis}.
For instance, \cite{Symbolic} demonstrated that Transformers surpass established computational software such as Mathematica and Matlab in accuracy and computational efficiency when solving integrals and ordinary differential equations. 
Although training on high-sensitivity functions was long considered theoretically difficult~\cite{failure, Sensitive}, recent empirical work has shown that techniques such as normalization~\cite{normalize}, weight decay~\cite{weight_decay}, and autoregressive generation~\cite{Parity} can successfully surmount these challenges.
In previous research, functions with low or high sensitivity have been the focus, leaving medium-sensitivity functions almost unexplored.
In this study, we examine Transformer behavior by training models to decode QR codes, thereby probing a function of medium sensitivity.

\paragraph{Deep Learning Approaches to QR Codes.}
Many studies have explored applying deep learning to QR codes~\cite{CNN-QR,Angle-QR,Deblur,Faster_R-CNN,SRGAN,ESR-GAN,Reconstruction,LightWeight}.
For instance, several focus on enhancing detectability by leveraging deep learning models to accurately locate QR codes within images~\cite{CNN-QR,Angle-QR,Faster_R-CNN,LightWeight}.
On the other hand, other approaches use deep learning models to restore image quality and reduce blurring or low resolution to improve recognition accuracy~\cite{Deblur,SRGAN,ESR-GAN,Reconstruction}.
QR code reading consists of two principal stages: detection and decoding.
Prior works have enhanced reading performance by proposing methods to facilitate detection. 
In contrast, this study aims to analyze the characteristics of the Transformer by training it for decoding.

\begingroup
\setlength{\tabcolsep}{10pt}
\renewcommand{\arraystretch}{1.3}
\begin{table*}[t]
\centering
\caption{Success rate (\%) of a model trained on data in which the mask pattern was automatically selected based on the scoring rule, simulating a realistic scenario. The table also shows the proportion (\%) of mask patterns in the training dataset. When the scoring rule selects the mask pattern, it tends to favor patterns 1 and 4, resulting in a lower success rate for the other mask patterns.}
\label{tab:mask_mix}
\vskip 0.05in
\begin{tabular}{cccccccccc}
\hline
Mask Pattern &  0 &  1 &  2 &  3 &  4 &  5 &  6 &  7 &  Average \\ \hline
Success Rate &  59.6 &  92.8 &  49.1 &  64.2 &  90.9 &  50.7 &  68.8 &  54.2 &  68.3 \\
Proportion & 1.3 & 60.6 & 1.7 & 2.7 & 29.1 & 1.2 & 2.4 & 1.1 & - \\
\hline
\end{tabular}
\end{table*}
\endgroup

\section{QR Codes} \label{background}
QR codes~\cite{ISO} are two-dimensional matrix codes developed by Denso Wave in 1994 and designed for high-speed information reading.
QR codes enable high-speed, accurate transmission of textual information and are used worldwide for various applications, including website access, electronic payments, and airline ticketing.

\begin{figure}[t]
\centering
\includegraphics[width=\linewidth]{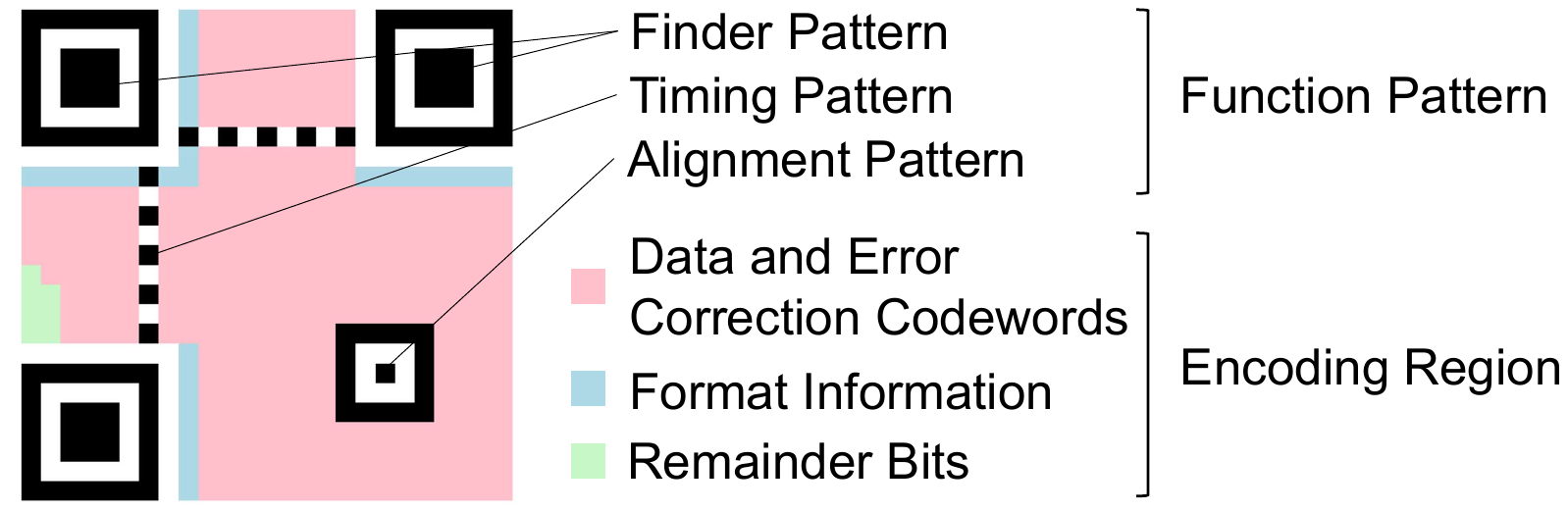}
\vspace{-15pt}
\caption{Structure of a v2-QR code. It comprises functional patterns, which support detection, and the encoding region, which embeds the encoded representation of the plain text.}
\label{fig:structure}
\end{figure}

\cref{fig:structure} illustrates the basic structure of a QR code. A QR code consists of black and white units (called \emph{modules}). 
Some of the units form \emph{function patterns}, which are used at the detection stage by cameras. 
The rest of the part corresponds to \emph{encoding region}, which encodes the data, along with error correction bits. The encoding region is divided into three components.
\paragraph{Data and Error-Correction Codewords.}
The plain text is encoded with the Reed–Solomon code and organized into 8-bit units called \emph{codewords}.
The data codewords represent the plain text itself, while the error-correction codewords contain the redundant information required for error correction. 
QR codes support four error correction levels (i.e., L, M, Q, and H in ascending order).
As the level increases, the proportion of error-correction codewords to the total number of codewords also increases.
\paragraph{Format Information.}
It encodes the error-correction level and mask pattern identifier using a BCH code. 
The resulting bit sequence is duplicated, with one copy placed next to the top-left finder pattern. 
The other copy is split between the regions next to the bottom-left and top-right finder patterns.
\paragraph{Remainder Bits.}
These bits are the leftovers resulting from the codeword placement. 
They contain no actual data or error-correction information and are ignored during decoding.

A QR code can have an imbalanced distribution of black and white modules. To mitigate this, a mask pattern is superposed onto the QR code in the final encoding step. There are eight mask patterns (\cref{fig:mask_pattern}), and the most suitable one is selected according to a scoring rule that penalizes poor arrangements, such as long runs of the same color or layouts that hinder detection. 

QR codes are defined in 40 versions, ranging from Version 1 to Version 40 by its size. The higher version has more modules, implying higher data capacity. For example, Version 1 has 21 modules per side, Version 2 has 25, and Version 3 has 29.
These versions are widely used in product packaging and promotional materials.

In what follows, we write QR codes, for example, of Version 3 and error correction level L as (v3, L)-QR codes.

\begin{figure}[t]
\centering
\includegraphics[width=\linewidth]{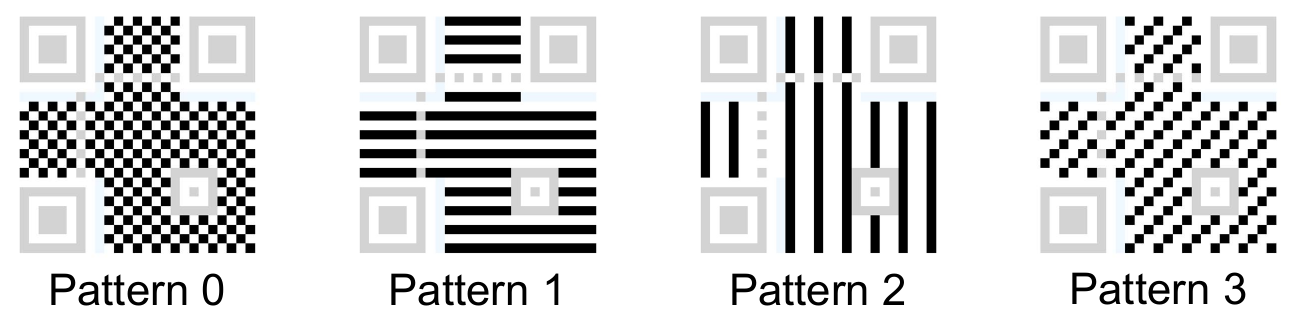}
\vspace{-15pt} 
\caption{Mask patterns 0 to 3. There are eight mask patterns in total. During masking, black modules within the designated pattern area are inverted. See \cref{appendix:mask_pattern} for all mask patterns.}
\label{fig:mask_pattern}
\end{figure}

\section{Success Rate Analysis of Error Correction}
We will later evaluate the robustness of QR code decoding by Transformers. 
To this end, we here derive the success rate of error correction in QR codes with $n$-bit errors. We assume that a bit flip occurs uniformly at random in the encoding region. To the best of our knowledge, this study is the first to derive the theoretical success rate.

Recall that the encoding region consists of the data and error-correction codewords, the format information, and the remainder bits, and the remainder bits are not used in the decoding. Let $N$ denote the total number of bits in the encoding region, and let $N_{\mathrm{d}}$, $N_{\mathrm{f}}$ and $N_{\mathrm{r}}$ be the numbers of bits in the data and error-correction codewords and the format information, remainder bits, respectively. Accordingly, the numbers of erroneous bits are denoted by $p$, $q$, and $n-p-q$, respectively. Then, the successful rate of error correction in encoding region is given by the following.
\begin{restatable}[name=Success Rate of Error Correction in the Encoding Region]{theorem}{ECCProbability}\label{theorem:ecc_probability}
Let  $n$ be the total number of bit errors in the encoding region.  
Then, the success rate of error correction —denoted by  $P_{\mathrm{success}}(n)$ —is given by:
\begin{align}\label{prob_total}
P_{\mathrm{success}}(n)
&= \frac{1}{\binom{N}{n}}
  \sum_{p=0}^{n}\sum_{q=0}^{n-p}
    W(p,q)P_{\mathrm{d}}(p)P_{\mathrm{f}}(q),
\end{align}
where $P_{\mathrm{d}}$ and $P_{\mathrm{f}}$ respectively denote the success rate of error correction in the data and error-correction codewords with $p$-bit errors and that in format information with $q$-bit errors, and 
\begin{align}
    W(p, q)=\binom{N_{\mathrm{d}}}{p}\binom{N_{\mathrm{f}}}{q}\binom{N_{\mathrm{r}}}{n-p-q}
\end{align}
is the probability that $(n-p-q)$-bit errors fall into the remainder bits.
\end{restatable}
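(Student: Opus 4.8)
The plan is to read $P_{\mathrm{success}}(n)$ as a probability over the uniformly random choice of which $n$ of the $N$ encoding-region bits are flipped, and then to decompose that probability by conditioning on how the errors spread across the three sub-regions. First I would fix the probability model: since each bit flip occurs uniformly at random, the set of erroneous positions is a uniformly chosen $n$-element subset of the $N$ bits, so every such subset carries probability $1/\binom{N}{n}$. This is precisely the source of the $\binom{N}{n}^{-1}$ prefactor in \eqref{prob_total}.

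Next I would partition the sample space according to the triple $(p,\,q,\,n-p-q)$ recording the number of errors that land in the data and error-correction codewords ($N_{\mathrm{d}}$ bits), the format information ($N_{\mathrm{f}}$ bits), and the remainder bits ($N_{\mathrm{r}}$ bits), respectively. The number of $n$-subsets realizing a prescribed triple is obtained by choosing error positions independently within each region, which gives exactly $W(p,q)=\binom{N_{\mathrm{d}}}{p}\binom{N_{\mathrm{f}}}{q}\binom{N_{\mathrm{r}}}{n-p-q}$. Summing $W(p,q)$ over all admissible $(p,q)$ with $0\le p\le n$ and $0\le q\le n-p$ returns $\binom{N}{n}$ by the Vandermonde identity, so $W(p,q)/\binom{N}{n}$ is the multivariate hypergeometric probability that the errors split as $(p,q,n-p-q)$, confirming the combinatorial reading of $W(p,q)$.

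I would then argue that, conditioned on a fixed split $(p,q,\,\cdot\,)$, decoding succeeds if and only if error correction succeeds separately in the codeword region and in the format information, since the remainder bits are irrelevant to decoding. The key point is that, conditioned only on the counts, the error pattern inside each region is uniformly distributed on its $\binom{N_{\mathrm{d}}}{p}$ (resp.\ $\binom{N_{\mathrm{f}}}{q}$) placements and the two patterns are independent across regions—a standard property of multivariate hypergeometric sampling. Because the format decoder and the codeword decoder act on disjoint bits, the conditional success probability factorizes as $P_{\mathrm{d}}(p)\,P_{\mathrm{f}}(q)$, matching the definitions of $P_{\mathrm{d}}$ and $P_{\mathrm{f}}$ as the region-wise correction rates under uniformly placed errors. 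Applying the law of total probability over all $(p,q)$ then yields $\sum_{p=0}^{n}\sum_{q=0}^{n-p}\frac{W(p,q)}{\binom{N}{n}}P_{\mathrm{d}}(p)P_{\mathrm{f}}(q)$, which is the claimed identity.

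The main obstacle I anticipate is not the counting but justifying the factorization $P_{\mathrm{d}}(p)\,P_{\mathrm{f}}(q)$ of the conditional success probability. This requires two ingredients: that conditioning on the split counts leaves the within-region error placements independent and each uniform, and that the decoding of the codeword region and of the format information genuinely decouples with no cross-region interaction. The first follows from the hypergeometric conditioning structure; the second relies on the remainder bits being decode-irrelevant and on the format and codeword decoders operating on disjoint bit ranges, so that neither region's correction outcome depends on the other's error pattern once the counts are fixed.
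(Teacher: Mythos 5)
Your proposal is correct and follows exactly the argument the paper intends: the paper actually states Theorem~\ref{theorem:ecc_probability} without an explicit proof (only the component results for $P_{\mathrm{d}}$ and $P_{\mathrm{f}}$ are proved in \cref{appendix:success_rate}), and your law-of-total-probability decomposition over the multivariate hypergeometric split $(p,q,n-p-q)$, together with the conditional uniformity and independence of the within-region error patterns and the decode-irrelevance of the remainder bits, supplies precisely the missing justification. Your observation that $W(p,q)$ is a count normalized by $\binom{N}{n}$ (rather than literally ``the probability'' as the statement phrases it) and that $\sum_{p,q}W(p,q)=\binom{N}{n}$ by Vandermonde is also the right reading of the formula.
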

In the following, we provide an overview of the derivation of $P_{\mathrm{d}}(p)$ and $P_{\mathrm{f}}(q)$. All the proofs can be found in \cref{appendix:success_rate}.

\paragraph{$\bm{P_{\mathrm{d}}(p)}$ - Success Rate in Data and Error-Correction Codewords.}
For the data and error-correction codewords, let $M = N_\mathrm{d} / 8$ denote the total number of codewords, and let $M_{\mathrm{ecc}}$ represent the number of error-correction codewords, which depends on the error correction level. 
According to the properties of Reed–Solomon codes, the maximum number of correctable codewords, denoted by $t$, is determined by $t = \lfloor M_\mathrm{ecc} / 2 \rfloor$.
Under these conditions, the success rate of error correction for the data and error-correction codewords can be determined as follows: 
\begin{restatable}[name=Success Rate of Error Correction in Data and Error-Correction Codewords]{theorem}{ProbabilityData}\label{theorem:prob_data}
Let $p$ be the number of erroneous bits within the data and error-correction codewords. Then, the success rate of error correction —denoted by $P_{\mathrm{d}}(p)$— is given by
\begin{align}
\begin{split}
    P_{\mathrm{d}}(p)
    &= \frac{1}{\binom{N_{\mathrm{d}}}{p}}\sum_{k = \lceil \frac{p}{8} \rceil}^{t} \binom{M}{k} \sum_{j=0}^{k}(-1)^{j} \binom{k}{j} \binom{8(k-j)}{p}.
\end{split}
\end{align}
\end{restatable}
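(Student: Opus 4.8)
The plan is to recast this probability as a pure counting problem and then invoke the correction property of Reed–Solomon codes. Because the $p$ bit errors are placed uniformly at random among the $N_{\mathrm{d}}$ positions, every error pattern of size $p$ is equally likely, so the denominator $\binom{N_{\mathrm{d}}}{p}$ is just the number of such patterns and $P_{\mathrm{d}}(p)$ is the fraction of them that are correctable. The entire task therefore reduces to counting correctable patterns in the numerator.

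First I would record the symbol-level correction criterion already stated in the setup: the Reed–Solomon code with $M_{\mathrm{ecc}}$ parity codewords corrects any error touching at most $t=\lfloor M_{\mathrm{ecc}}/2\rfloor$ codewords, where an $8$-bit codeword counts as erroneous once at least one of its bits is flipped. Hence a bit-error pattern is correctable \emph{if and only if} the number $k$ of codewords containing at least one flipped bit satisfies $k\le t$, and the numerator is exactly the number of ways to scatter $p$ bit errors so that they touch at most $t$ of the $M$ codewords.

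Next I would stratify this count by the precise number $k$ of affected codewords. For fixed $k$, I choose which codewords are affected in $\binom{M}{k}$ ways; the $p$ errors must then fall in the $8k$ bits of those codewords while hitting each of them at least once. This is a surjective-type placement, counted by inclusion–exclusion over the subset of chosen codewords that are missed: deleting any $j$ of the $k$ codewords leaves $8(k-j)$ admissible positions, so the count is $\sum_{j=0}^{k}(-1)^{j}\binom{k}{j}\binom{8(k-j)}{p}$. The admissible range is $\lceil p/8\rceil\le k\le t$: the upper limit is the correction threshold, while the lower limit reflects that $p$ errors cannot be packed into fewer than $\lceil p/8\rceil$ codewords. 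For smaller $k$ the inner binomial $\binom{8(k-j)}{p}$ vanishes regardless, so the lower bound is cosmetic but keeps the sum finite and clean. Summing over $k$ and dividing by $\binom{N_{\mathrm{d}}}{p}$ produces the stated formula.

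The main obstacle I anticipate is justifying the inclusion–exclusion term rigorously—specifically, confirming that the alternating sum enforces ``every chosen codeword is hit'' rather than the weaker ``all errors lie within the chosen codewords,'' and verifying the boundary cases (such as $j=k$, or $k$ near $t$ with $p$ large) so that the count is exact. Once the per-$k$ surjective count is established, the summation over $k$ carries no additional difficulty: each error pattern determines a unique set of affected codewords, so the strata indexed by different $k$ are automatically disjoint and no pattern is double-counted.
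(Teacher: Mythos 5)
Your proposal is correct and follows essentially the same route as the paper: condition on the number $k$ of codewords touched by at least one error, choose them in $\binom{M}{k}$ ways, count the surjective placements of the $p$ bit errors into their $8k$ positions by inclusion--exclusion, and sum over $k \le t$ using the Reed--Solomon correction criterion. The inclusion--exclusion step you flag as the main obstacle is exactly what the paper formalizes via a standalone lemma, and your observation that the lower summation limit $\lceil p/8\rceil$ is cosmetic (the inner binomial vanishes below it) is a small but accurate addition.
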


\paragraph{$\bm{P_{\mathrm{f}}(q)}$ - Success Rate in Format Information.}
The format information is encoded into 15 bits using a $(15, 5)$ BCH code, and the resulting bit sequence is duplicated so that two identical copies are placed in the QR code.
If error correction works in either one of the two, the format information can be correctly read.
The $(15, 5)$ BCH code can correct up to 3-bit errors.
Under these conditions, the success rate of error correction for format information is given below:
\begin{restatable}[name=Success Rate of Error Correction in Format Information]{theorem}{ProbabilityFormat}
\label{theorem:prob_format}
Let $q$ denote the number of erroneous bits affecting the format information, and let $i$ and $j$ represent the number of bit errors in each of the two respective instances.
Then, the success rate of error correction —denoted by $P_\mathrm{f}$—is given by
\begin{align}
P_{\mathrm{f}}(q)
&= \frac{\bigl|\{(i,j)\in\mathbb{N}_0^2 \mid i+j=q,\ \min(i,j)\le3\}\bigr|}{q+1}.
\end{align}
\end{restatable}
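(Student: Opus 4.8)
The plan is to read $P_{\mathrm{f}}(q)$ as a success probability conditioned on exactly $q$ of the format bits being flipped, and to reduce it to a finite counting problem over the two duplicated copies. The two copies are identical $15$-bit BCH words, so the only datum that governs whether decoding succeeds is the pair $(i,j)$ recording how the $q$ errors are allocated between them, subject to $i+j=q$ and $i,j\in\mathbb{N}_0$. I would therefore take the sample space to be the set of these allocations and compute the ratio of favorable to total outcomes.

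First I would pin down the denominator. The number of ordered pairs $(i,j)\in\mathbb{N}_0^2$ with $i+j=q$ is exactly $q+1$, since $i$ ranges over $\{0,1,\dots,q\}$ and then $j=q-i$ is forced. Treating each allocation as equally likely---the modeling assumption that the $q$ errors split uniformly across the two copies---assigns each outcome probability $1/(q+1)$, which produces the denominator in the claimed formula.

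Next I would identify the favorable outcomes. Because the $(15,5)$ BCH code corrects up to three errors, a single copy decodes correctly precisely when it carries at most $3$ errors, and the format information is recovered as soon as at least one copy decodes, i.e.\ when $i\le 3$ or $j\le 3$. Since $i+j=q$, this disjunction is exactly $\min(i,j)\le 3$, so the favorable set is $\{(i,j)\in\mathbb{N}_0^2 \mid i+j=q,\ \min(i,j)\le 3\}$. Summing the uniform weight $1/(q+1)$ over this set gives $P_{\mathrm{f}}(q)$ as stated. The cardinality itself is routine to make explicit: for $q\le 7$ every allocation satisfies $\min(i,j)\le\lfloor q/2\rfloor\le 3$, so all $q+1$ outcomes are favorable and $P_{\mathrm{f}}(q)=1$, while for $q\ge 8$ the favorable allocations are exactly those with $i\in\{0,1,2,3\}$ or $j\in\{0,1,2,3\}$, two disjoint blocks of four, giving cardinality $8$.

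The step I expect to carry the real weight is not the arithmetic but the probabilistic setup---specifically, justifying that each count-allocation $(i,j)$ is weighted equally rather than by the bit-level factor $\binom{15}{i}\binom{15}{j}$ that a uniform flip model over all $N_{\mathrm{f}}$ format bits would impose. Making this convention explicit, and reconciling it with the way $q$ errors are drawn in \cref{theorem:ecc_probability}, is the one place where care is genuinely needed, since the clean $q+1$ denominator depends entirely on it.
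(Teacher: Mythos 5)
Your argument is essentially identical to the paper's proof: both count the $q+1$ ordered allocations $(i,j)\in\mathbb{N}_0^2$ with $i+j=q$, identify the favorable ones as those with $\min(i,j)\le 3$ via the $3$-error-correcting capability of the $(15,5)$ BCH code, and take the ratio. The modeling concern you flag at the end is well founded---under the uniform bit-flip model of \cref{theorem:ecc_probability} the allocation $(i,j)$ would carry weight $\binom{15}{i}\binom{15}{q-i}\big/\binom{30}{q}$ rather than $1/(q+1)$---but the paper's own proof adopts the same uniform-over-allocations convention without further justification, so you have reproduced, and correctly identified the one delicate assumption in, exactly the argument given there.
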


\section{Learning to Decode QR Codes} \label{learning}
In this section, we present the evaluation results of the Transformer's QR code decoding success rate and robustness. 
We assume that QR code detection has been successfully performed and focus solely on the decoding phase.

\subsection{Setup} \label{setup_decode}
\paragraph{Task.}
Transformers are trained for the QR code decoding task.
The input is a QR code, and the output is the plain text embedded in the QR code.
The QR code is passed to the model in the bit string format.

\paragraph{Dataset.}
We sampled domain names from Tranco~\cite{Tranco}, a publicly available ranking of popular domain names.
The sampled domain names were encoded into QR codes using the Segno library\footnote{\url{https://github.com/heuer/segno}} as a QR code generator.
Each QR code was then linearized into a one-dimensional sequence following \cref{fig:flatten}(d).
We fixed each dataset's QR code version, error correction level, and mask pattern.
We generated 500,000 samples for the training dataset and 1,000 samples for the evaluation dataset.

\paragraph{Model \& Training.}
To evaluate the general capabilities of the Transformer, we adopted a standard architecture~\cite{Transformer} (six encoder and decoder layers, eight attention heads) and conventional training settings (AdamW optimizer~\cite{AdamW} with a linearly decaying learning rate starting from $10^{-4}$).
Following~\cite{BART}, we applied weight sharing between the input embeddings and the output projection layer and employed learnable positional embeddings.
The batch size was set to 16, and training was conducted for 10 epochs.

\paragraph{Evaluation.}
The decoding success rate is defined as the proportion of successfully decoded samples in the evaluation dataset.
A decoding is considered successful if the Transformer's output exactly matches the plain text.
To evaluate robustness, we compute the decoding success rate on evaluation datasets composed of artificially corrupted QR codes.
We introduce two types of artificial corruption: flip errors and burst errors.
Flip errors are introduced by randomly selecting bits within the QR code and inverting their binary values.
Burst errors are generated by randomly selecting a 3×3 square region and forcing all bits within that region to be 1.
In this experiment, we assume that the detection stage has already been completed.
Therefore, corruption is applied only to the encoding region, excluding functional patterns.
For evaluation on corrupted data, we compare the Transformer with pyzbar\footnote{\url{https://github.com/NaturalHistoryMuseum/pyzbar}}—a Python wrapper for the widely used open-source ZBar QR code decoder\footnote{\url{https://github.com/mchehab/zbar}}.

\begin{figure}[t]
    \centering
    \includegraphics[width=\linewidth]{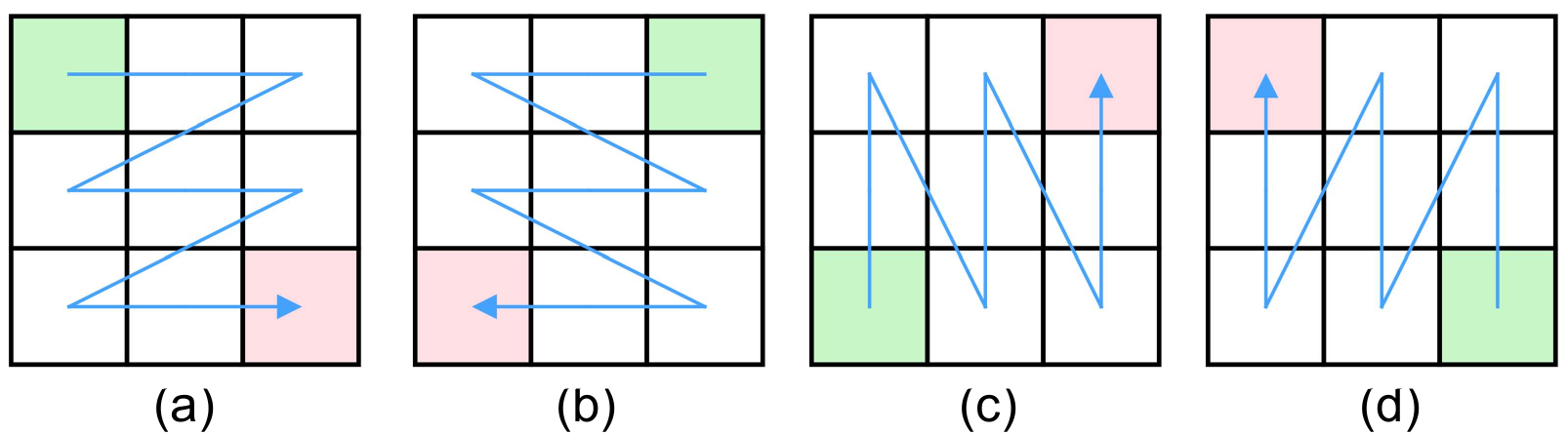}
    \vspace{-15pt}
    \caption{Candidate linearization orders for converting a QR code into a one-dimensional sequence}
    \label{fig:flatten}
\end{figure}

\begingroup
    \setlength{\tabcolsep}{10pt}
    \renewcommand{\arraystretch}{1.3}
    \begin{table}[t]
        \centering
        \caption{Success rate (\%) for four different linearizing orders in (v3, L)-QR codes. The ordering shown in \cref{fig:flatten}(d) outperforms the others, suggesting it is the most suitable for training Transformers.}
        \label{tab:flatten}
        \vskip 0.05in
        \begin{tabular}{ccccc}
        \hline
        Order & (a) & (b) & (c) & (d) \\ 
        \hline
        Success Rate & 93.3 & 90.2 & 93.9 & $\bm{95.5}$ \\
        \hline
        \end{tabular}
    \end{table}
\endgroup

\paragraph{Preliminary Experiments.}
The aforementioned experimental setup is based on three observations from our preliminary experiments. 
\begin{enumerate}[itemsep=2pt, parsep=0pt, topsep=1pt, partopsep=0pt ]
    \item It was unsuccessful to train Transformers on data where the mask pattern was automatically selected, which reflects a realistic scenario (\cref{tab:mask_mix}). 
    This is because the proportion of mask patterns becomes highly imbalanced. The details can be found in \cref{appendix:mask_mix}.
    \item It was easy to train a near-perfect classifier that classifies the mask pattern from input QR codes with almost 100\,\% accuracy, see \cref{appendix:mask_classification}. 
    \item Transformer trained on bit strings in the ordering in \cref{fig:flatten}(d) was more successful than in others (\cref{tab:flatten}).
    This is likely because the ordering in \cref{fig:flatten}(d) more closely matches the bit ordering of QR codes, as illustrated in \cref{fig:order}.
\end{enumerate}

From the first two observations, we decided to focus on a fixed mask pattern. From the last observation, we adopted the ordering in \cref{fig:flatten}(d) in our main experiments.

\begin{figure}[t]
    \centering
    \includegraphics[width=\linewidth]{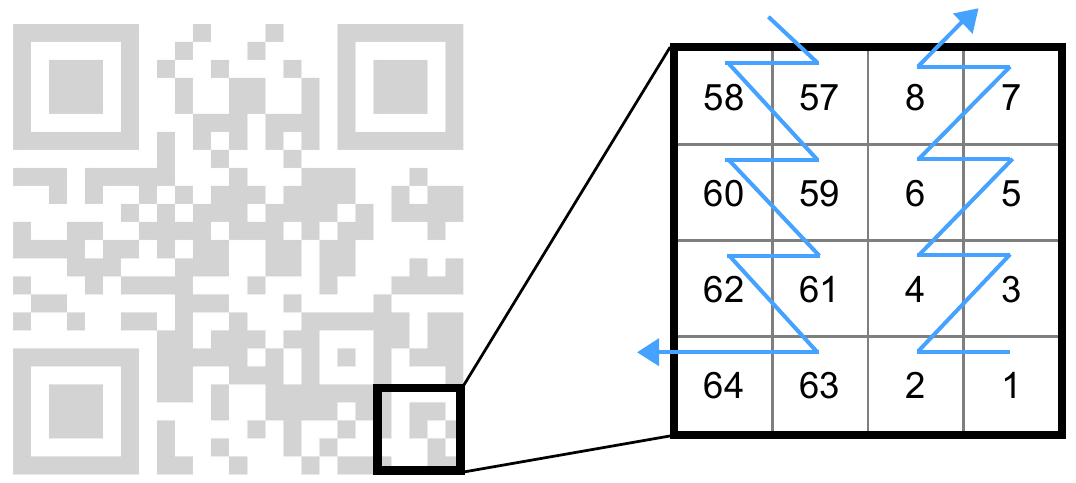}
    \vspace{-15pt}
    \caption{Arrangement order of encoded data in v2-QR Code. The data is placed in a vertical zigzag pattern, moving up and down in two-column bands while avoiding function patterns.}
    \label{fig:order}
\end{figure}

\begingroup
\setlength{\tabcolsep}{10pt}
\renewcommand{\arraystretch}{1.3}
\begin{table*}[t]
    \centering
    \caption{Success rate (\%) of Transformers for each mask pattern in (v1, L) to (v3, L)-QR codes. Across all versions, the model achieves an average decoding success rate of over 93\,\%, suggesting that Transformers can learn functions with medium sensitivity. Here, as shown in \cref{background}, the version determines the number of bits and the structural layout of the QR code; the error correction level represents the degree of redundancy for data recovery; and the mask pattern specifies the type of mask applied after encoding.}
    \label{tab:accuracy}
    \vskip 0.05in
    \begin{tabular}{cccccccccc}
    \hline 
    Mask Pattern &  0 &  1 &  2 &  3 &  4 &  5 &  6 &  7 &  Average \\ \hline 
    Version 1 &  99.0 &  97.9 &  99.2 &  97.8 &  98.6 &  98.5 &  97.5 &  97.8 &  98.3 \\
    Version 2 &  96.0 &  95.5 &  96.3 &  95.0 &  92.1 &  95.0 &  94.5 &  94.3 &  94.8 \\
    Version 3 &  95.5 &  95.5 &  95.6 &  93.6 &  95.1 &  91.6 &  93.4 &  90.8 &  93.9 \\
    \hline 
    \end{tabular}
\end{table*}
\endgroup

\begin{figure*}[t]
    \centering
    \includegraphics[width=\linewidth]{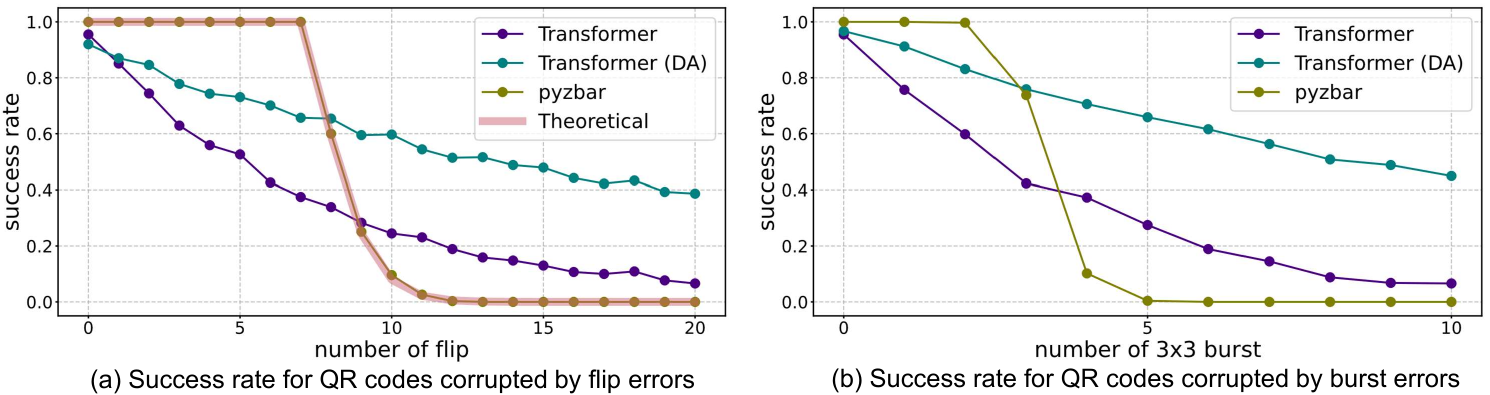}
    \vspace{-15pt}
    \caption{Success rate on corrupted (v3, L)-QR codes (mask pattern 0) for each method. (a) Success rate under flip noise. (b) Success rate under burst noise. ``Transformers (DA)'' refers to the model trained with corrupted data augmentation. ``Theoretical'' refers to the success rate of error correction proved in \cref{theorem:ecc_probability}. When the level of corruption is high, the Transformer can decode with a greater success rate than the success rate guaranteed by the QR code's error correction capability. In addition, applying data augmentation leads to substantial improvements in error resilience.}
    \label{fig:graph}
\end{figure*}

\subsection{Decoding Success Rate and Robustness}
We evaluated the Transformer's ability to decode QR codes by decoding success rate, robustness to corruption, and the effect of data augmentation. 
We also looked at the model's outputs when decoding both clean and corrupted QR codes.

\paragraph{Decoding Success Rate.}
\cref{tab:accuracy} shows the decoding success rate of (v1, L) to (v3, L)-QR codes under each mask pattern.
Across all versions, the average success rate exceeded 93\,\%, indicating high performance.
These results suggest that Transformers can learn medium-sensitivity functions.

\paragraph{Decoding Robustness.}
\cref{fig:graph} illustrates the evaluation results for (v3, L)-QR codes (mask pattern 0) with two types of corruption.
\cref{fig:graph}(a) shows that the Transformer achieves a higher success rate than the theoretical success rate of error correction proven in \cref{theorem:ecc_probability} when the number of flip errors exceeds nine.
Furthermore, the Transformer surpassed the pyzbar when the number of flip errors exceeded nine or burst errors exceeded four.
This result indicates that the Transformer exhibits superior robustness under severe corruption.
Additionally, in \cref{appendix:severe}, we present examples of severe corruption where pyzbar fails to decode the QR code but the Transformer succeeds (\cref{appendix:sample_flip,appendix:sample_burst}).
Such robustness may be attributed to the model's ability to learn regularities in word patterns commonly found in domain names, thereby enabling it to infer and reconstruct corrupted information.
Note that the plain texts used in the training and evaluation datasets differ.

\paragraph{Data Augmentation.}
We trained the Transformer on (v3, L)-QR codes (mask pattern 0) augmented with flip and burst errors.
As shown in \cref{fig:graph}, it improved significantly on corrupted data.
These results suggest that training on corrupted inputs can make models more robust.

\paragraph{Examples of Generated Strings.}
Examples of strings generated during decoding by the Transformer are shown in \cref{tab:example_generation}.
When the generated string matches the original string precisely, the model successfully handled both short and long plain text.
In addition, even when the generated strings did not precisely match the original string, they were generally close to the ground truth. 
For instance, for ``dtv2009.gov,'' the model outputs ``dtv2008.gov,'' which differs only by one character. 
There are also cases where the model makes mistakes because it has learned specific patterns in English words.
For example, for the plain text ``doggettinc.com,'' the Transformer incorrectly generated ``doggetting.com.''
This was likely because it misinterpreted ``inc'' as part of the preceding word and segmented it as ``getting,'' reflecting a misunderstanding influenced by learned linguistic regularities.
On the other hand, \cref{tab:example_generation}(b) shows examples of outputs generated from inputs with 20 random bit flips.
Compared to the clean examples in \cref{tab:example_generation}(a), the outputs are generally less similar to the original strings.
However, some errors affect only one character, such as ``delicom.global'' becoming ``dedicom.global.''
As shown in \cref{fig:graph}(a), pyzbar fails to return any output when 20-bit corruption is applied.
This is because it is designed to suppress output entirely once the corruption exceeds a certain threshold to avoid incorrect results.
In contrast, the Transformer always produces an output, even if it is incorrect.
While this can lead to errors, the outputs often remain close to the original, even with heavy corruption.
Additionally, in \cref{appendix:severe}, we present the distribution of similarity between the generated strings and the original plain text under severe corruption (\cref{appendix:fig:similarity}).

\begin{table*}[t]
\renewcommand{\arraystretch}{1.2}
\centering
\caption{Examples of strings generated by the Transformer from clean and corrupted (v3, L)-QR codes (mask pattern 0). For clean inputs (a), the model successfully reconstructs the plain text with a high success rate for both short and long sequences. Even when decoding fails, the generated strings exhibit high similarity to the ground truth. For inputs with 20-bit flips (b), lower-similarity failures are observed. However, despite severe corruption, both successful cases and failures are also present, with high similarity to the ground truth.}
\label{tab:example_generation}
\vskip 0.05in
\begin{minipage}[t]{0.48\textwidth}
\centering
\text{(a) Clean QR codes}
\vskip 0.05in
\begin{tabular}{ccc}
\hline
& Ground Truth & Prediction \\ \hline
$\checkmark$ & ellis.ru & ellis.ru \\
$\checkmark$ & mobile-arsenal.com.ua & mobile-arsenal.com.ua \\
$\checkmark$ & osakabasketball.jp & osakabasketball.jp \\ \hline
$\times$ & doggettin\textcolor{red}{c}.com & doggettin\textcolor{red}{g}.com \\
$\times$ & elalman\textcolor{red}{a}que.com & elalman\textcolor{red}{i}que.com \\
$\times$ & dtv200\textcolor{red}{9}.gov & dtv200\textcolor{red}{8}.gov \\ \hline
\end{tabular}
\end{minipage}
\hfill
\begin{minipage}[t]{0.48\textwidth}
\centering
\text{(b) QR codes with 20-bit flips}
\vskip 0.05in
\begin{tabular}{ccc}
\hline
& Ground Truth & Prediction \\ \hline
$\checkmark$ & casino-x-dawn.bet & casino-x-dawn.bet \\
$\checkmark$ & mobil-isc.de & mobil-isc.de \\
$\checkmark$ & slotsyps.info & slotsyps.info \\ \hline
$\times$ & de\textcolor{red}{l}icom.global & de\textcolor{red}{d}icom.global \\
$\times$ & inh\textcolor{red}{and}.com & inh\textcolor{red}{cdn}.com \\
$\times$ & mob\textcolor{red}{i}le-arsena\textcolor{red}{l}.com.\textcolor{red}{ua} & mob\textcolor{red}{x}le-arsena\textcolor{red}{t}.com.\textcolor{red}{mt} \\
\hline
\end{tabular}
\end{minipage}
\end{table*}

\section{Generalization performance of decoding with Transformer} \label{generalization}
In this section, we evaluate the generalization performance of the Transformer in QR code decoding on various types of plain text.

\subsection{Setup}
The training configuration is identical to that in \cref{setup_decode}.  
To evaluate the Transformer's generalization, we generated eight evaluation sets.
Each evaluation sample follows the domain name format
\emph{word}\textsubscript{1}\emph{word}\textsubscript{2}\texttt{.}\emph{tld},
where \emph{word}\textsubscript{1} and \emph{word}\textsubscript{2} are concatenated to form the second-level domain (SLD), and \emph{tld} denotes the top-level domain (TLD), which is randomly selected from frequent options such as ``com,'' ``org,'' or ``co.''
We constructed eight evaluation sets, each comprising 5,000 (v3, L)-QR codes (mask pattern 0) generated from domain names with varying lexical or structural properties. The eight evaluation datasets are as follows:

\begin{itemize}[itemsep=1pt, parsep=0pt, topsep=0pt]
\item \textbf{English, German, Swahili}: words in the respective language. 
\item \textbf{Shuffle}: words created by randomly permuting the characters of English words.
\item \textbf{Random-alphabet}: random sequences consisting of alphabet characters.
\item \textbf{Misspelled}: English words in which one character is randomly replaced with an incorrect alphabet.
\item \textbf{Leetspeak}: English words rewritten using visually similar numerals (e.g., c4t; originally cat).
\item \textbf{no-TLD}: two English words concatenated without a top-level domain (e.g., pianofox instead of pianofox.com). This structural deviation removes the period and TLD component, setting it apart from the standard domain format used in the other datasets.
\end{itemize}

\begingroup
    \setlength{\tabcolsep}{6pt}
    \renewcommand{\arraystretch}{1.3}
    \begin{table*}[t]
        \centering
        \caption{Success rate (\%) across eight datasets. In natural language datasets such as English, German, and Swahili, the Transformer shows a high success rate, indicating that the Transformer effectively leverages linguistic regularities. On the other hand, datasets lacking such structure (e.g., Shuffle, Random-alphabet) or containing minor perturbations (e.g., Misspelled, Leetspeak) result in a lower success rate.}
        \label{tab:generalization}
        \vskip 0.05in
        \begin{tabular}{ccccccccc}
        \hline
        Dataset  & English & German & Swahili & Shuffle & Random-alphabet & Misspelled & Leetspeak & no-TLD \\
        \hline
        Success Rate (\%)    & 99.5    & 97.1      & 96.6   & 95.3  & 94.4  & 88.1      & 72.5      &  3.0     \\
        \hline
        \end{tabular}
    \end{table*}
\endgroup

\subsection{Generalization Performance on Multiple Data Sets}
\cref{tab:generalization} presents the evaluation results on eight evaluation sets.
The Transformer's success rate varied depending on the characteristics of the dataset.

\paragraph{Natural Language Datasets.} 
In the English, German, and Swahili datasets, consisting of natural language, the Transformer achieved higher success rates than the others. 
Since the training data was created from popular domain names in the Tranco, many samples include English words. As a result, the Transformer likely learned common patterns such as frequently used prefixes and suffixes, as well as other frequent patterns like consonant–vowel alternation. 
The differences in success rates among English, German, and Swahili are likely due to differences in the structure of words across these languages.

\paragraph{Unstructured Datasets.}
Compared to the English dataset, the Transformer achieved 3.2\,\% lower success rates on the Shuffle dataset and 4.1\,\% lower on the Random-alphabet dataset.
This is likely because these datasets lack frequent patterns and regularities typically found in real words.

\paragraph{English-Variant Datasets.}
Although the Misspelled and Leetspeak datasets retain much of the English word structure, the Transformer achieved lower success rates than the English dataset.
Specifically, the success rate dropped by 11.4\,\% on the Misspelled dataset and by 27.0\,\% on the Leetspeak dataset.
These results suggest that even small changes can cause the Transformer to misread letters and instead predict other characters based on learned patterns.
As shown in \cref{appendix:misspelled}, we identified examples in the Misspelled dataset where the model's learned regularities of English words led to failures.
The extremely low success rate on the Leetspeak dataset may be attributed to the lower frequency of digits than letters in the training dataset.

\paragraph{no-TLD Dataset.} 
The success rate on the no-TLD dataset was notably low at 3.0\,\%. This can be attributed to the fact that the Transformer was trained exclusively on domain names, which caused it to append a top-level domain during inference in most cases.

The experiment shows that the Transformer learns language rules and frequent patterns during training. As a result, it performs well on datasets made from a natural language with a clear structure. 
In contrast, it showed a lower success rate on the Shuffle and Random-alphabet datasets, which lack such structure.
The Misspelled and Leetspeak datasets kept the basic structure of English words but included small changes. 
In these cases, the Transformer was often confused by the small changes and mistakenly treated them as clean patterns it had learned before.

\section{Sensitivity Analysis of Transformer's Behavior in QR Code Decoding}
In this section, we train Transformers on different error-correction levels (i.e., different input sensitivity) and examine their effect on the trainability of Transformers and the sensitivity after training.

\begin{figure}[t]
    \centering
    \includegraphics[width=\linewidth]{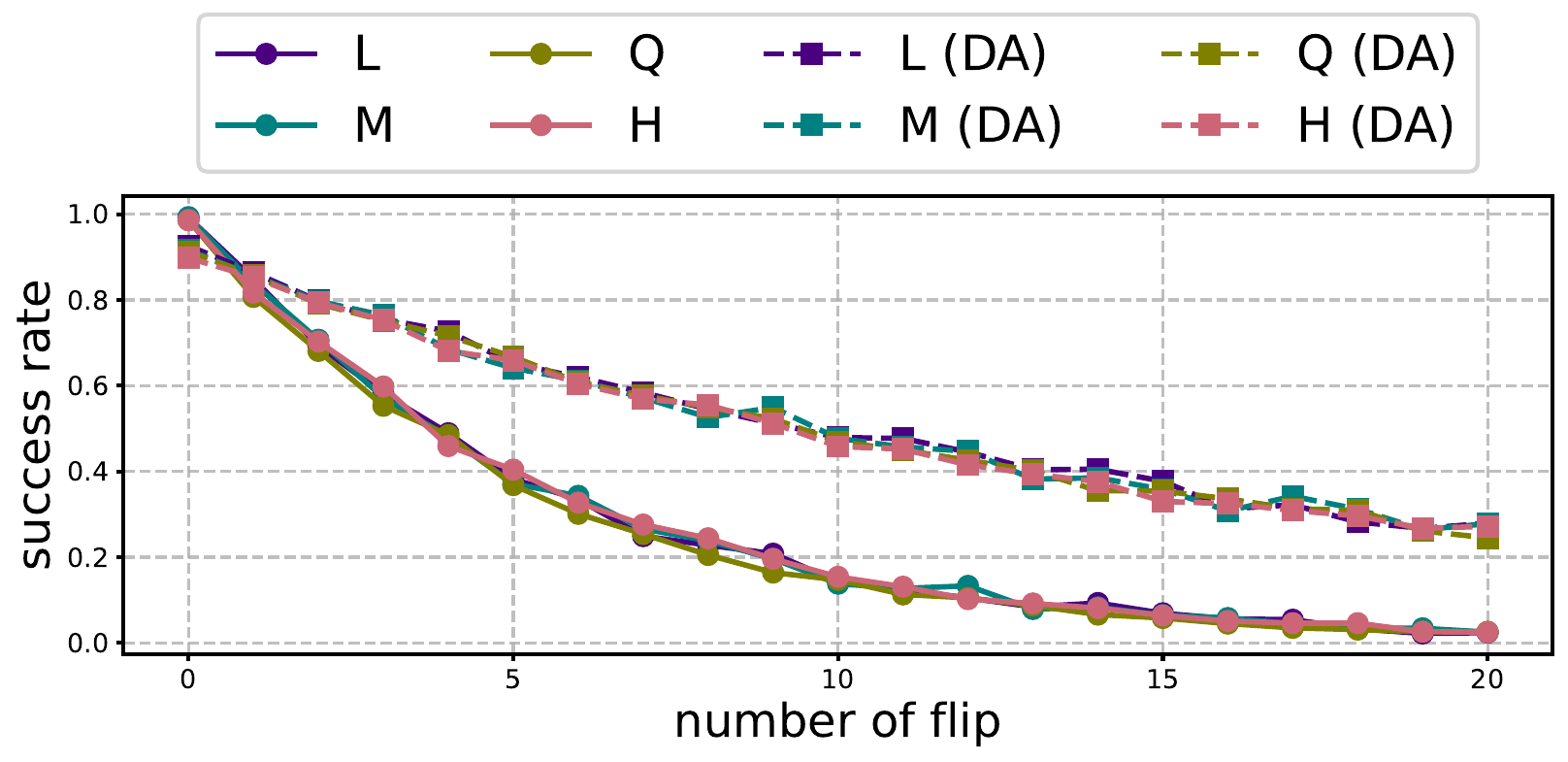}
    \vspace{-15pt}
    \caption{Success rates under (a) flip and (b) burst errors were evaluated for v2-QR codes with all four error correction levels. ``(DA)'' indicates the result when data augmentation is applied during training. There were almost no differences in success rates between the levels. }
    \label{fig:flip_ce_da}
\end{figure}

\subsection{Setup}
Recall that QR codes support four levels of error correction—L, M, Q, and H—in ascending order of strength, with each higher level able to correct more bit errors.
Moreover, as the level increases, the proportion of error-correction codewords to the total number of codewords also increases, which leads to lower sensitivity~(cf.~\cref{appendix:tab:sensitivity}).
We trained Transformers for the v2-QR code decoding task with each error-correction level.
All training and evaluation settings aside from the dataset were identical to those in \cref{setup_decode}.
The training and test sets were constructed from the same plain texts across different levels so that we could examine the impact of sensitivity differences.
Therefore, sensitivity differences appear only in the error-correction codewords.

\subsection{Results and Discussion on Sensitivity Behavior}
\cref{fig:flip_ce_da} shows the success rates of decoding v2-QR codes across the four error-correction levels.
The sensitivity difference did not impact the success rate. 
To understand this, we plot the success rate decay over the number of flip errors in \cref{fig:heterogeneous}. 
It reveals that the bit flips in data codewords decrease the success rate, while those in error-correction codewords do not. 
This tendency holds even when data augmentation is applied during training.
Namely, Transformers do not use error correction bits for QR code decoding.

\begin{figure}[t]
    \centering
    \includegraphics[width=\linewidth]{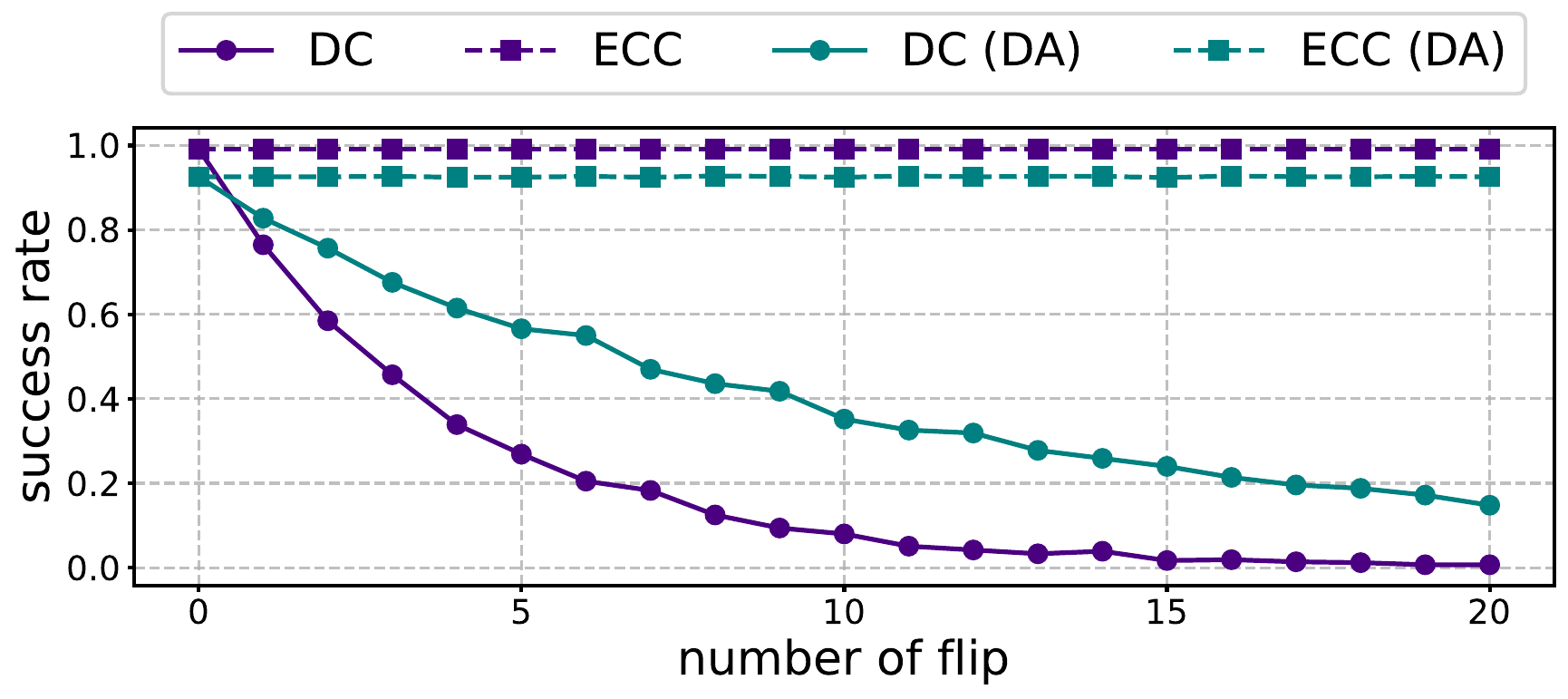}
    \vspace{-15pt}
    \caption{Success rates when bit-flip errors are applied either to the data codewords (DC) or the error-correction codewords (ECC) of (v2, L)-QR codes. ``(DA)'' indicates the result when data augmentation is applied during training. While errors in the error-correction codewords have little impact, those in the data codewords significantly degrade the success rate.}
    \label{fig:heterogeneous}
\end{figure}

\section{Conclusions}
This study tackles the QR code decoding task as an example of a medium-sensitivity function.
We demonstrate that Transformers achieve a high success rate under low corruption and maintain moderate decoding success rates even when the corruption exceeds the theoretical error-correction limit, which we newly derived.
Moreover, a Transformer trained on datasets rich in English words learns the structure of natural language and generalizes not only to other languages but also to random alphabetic strings.
In addition, we show that the Transformer learns a function that is insensitive to error-correction codewords.
This finding suggests that the Transformer performs error correction through a mechanism different from that of standard QR code readers.

While our findings provide insights into the Transformer's ability to decode QR codes as a medium-sensitivity function, this study does not investigate how its internal structure affects decoding performance. In future work, comparing architectures with different positional embeddings and attention mechanisms will be essential to gain a deeper understanding of the Transformer's behavior in decoding.

\section*{Acknowledgments}
This research was partially supported by JST PRESTO Grant Number JPMJPR24K4, Mitsubishi Electric Information Technology R\&D Center, the Chiba University IAAR Research Support Program and the Program for Forming Japan's Peak Research Universities (J-PEAKS), and JSPS KAKENHI Grant Number JP23K24914.

\bibliographystyle{plainnat}

\newpage
\appendix
\onecolumn

\section{Mask Pattern} \label{appendix:mask_pattern}
QR Code has eight different mask patterns~\cite{ISO}.
\cref{fig:mask_pattern_all} shows a list of mask patterns.

\begin{figure}[ht]
    \centering
    \includegraphics[width=\linewidth]{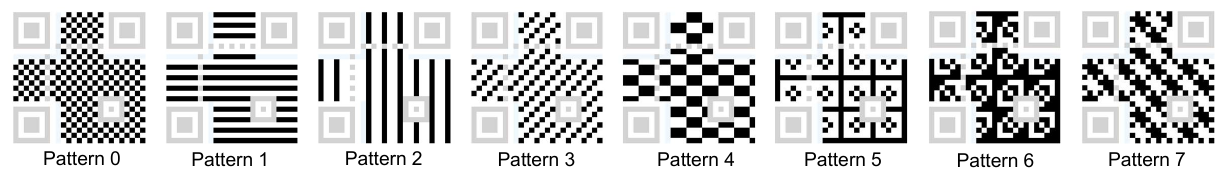}
    \vspace{-15pt}
    \caption{List of mask patterns.}
    \label{fig:mask_pattern_all}
\end{figure}

\section{Transformer Performance on Data with Mask Patterns Automatically Selected by the Scoring Rule} \label{appendix:mask_mix}
In QR codes, mask patterns are ordinarily selected automatically by the scoring criteria~\cite{ISO}. In this section, we construct the dataset by encoding v3-QR codes from internet-domain inputs with mask patterns chosen automatically by the scoring rules and subsequently train the Transformer. For evaluation, we prepare 1,000 samples for each of the eight mask patterns, yielding 8,000 samples. The experimental configuration follows that of Section 5.1. As presented in \cref{appendix:tab:mask_mix}, the Transformer attains an average decoding success rate of 68.3\,\% when mask patterns are heterogeneous. Although only patterns 1 and 4 achieve decoding accuracies above 90\,\%, the remaining patterns fall below the 3\,\% threshold. We attribute this disparity to the biased proportion of mask patterns within the training dataset. \cref{appendix:tab:mask_mix} depicts this proportion, revealing that most of the 500,000 training samples correspond to patterns 1 and 4. This pronounced imbalance results in an insufficient representation of the other patterns, impairing generalization and reducing inference performance.

\begingroup
\setlength{\tabcolsep}{10pt}
\renewcommand{\arraystretch}{1.3}
\begin{table*}[ht]
\centering
\caption{Success rate (\%) of a model trained on data in which the mask pattern was automatically selected based on the scoring rule, simulating a realistic scenario. The table also shows the proportion (\%) of mask patterns in the training dataset. When the scoring rule selects the mask pattern, it tends to favor patterns 1 and 4, resulting in a lower success rate for the other mask patterns. (Repeated From \cref{tab:mask_mix})} \label{appendix:tab:mask_mix}
\vskip 0.05in
\begin{tabular}{cccccccccc}
\hline
Mask Pattern &  0 &  1 &  2 &  3 &  4 &  5 &  6 &  7 &  Average \\ \hline
Success Rate &  59.6 &  92.8 &  49.1 &  64.2 &  90.9 &  50.7 &  68.8 &  54.2 &  68.3 \\
Proportion & 1.3 & 60.6 & 1.7 & 2.7 & 29.1 & 1.2 & 2.4 & 1.1 & - \\
\hline
\end{tabular}
\end{table*}
\endgroup

\section{Mask Pattern Classification via Transformer} \label{appendix:mask_classification}
To ascertain whether a Transformer can discriminate among mask patterns, we trained and evaluated the model using a dataset encompassing all eight mask patterns. For training, we curated 800,000 popular internet domains from the Tranco list and synthesized the dataset such that each mask pattern was represented by 100,000 samples. For evaluation, we collected 8,000 samples from internet domains distinct from the training set, ensuring each mask pattern comprised 1,000 samples. The architecture and training protocol were identical to those described in \cref{setup_decode}. We set the batch size to eight and trained the model for one epoch. The evaluation revealed that the Transformer achieved a classification accuracy of 100\,\%. This result confirms that mask patterns can be classified with perfect accuracy; consequently, in our proposed method, we assume known mask patterns and train separate models for each pattern.

\section{Generated Strings for Misspelled Dataset} \label{appendix:misspelled}
\cref{appendix:tab:Misspelled} shows examples of strings generated by the Transformer on the Misspelled dataset in the experiments of \cref{generalization}.
From the successful cases, it is clear that the Transformer can handle spelling mistakes. 
However, the failure examples reveal instances where its learned knowledge of English word patterns actually leads to incorrect outputs.
For example, the plain text ``domajinprotocjl.me'' is composed of misspellings of the two English words ``domain'' and ``protocol,'' but the Transformer generated ``domainprotocol.me.''
This appears to result from misreading ``domaij'' as ``domain'' and ``protocjl'' as ``protocol.''

\setlength{\tabcolsep}{15pt} 
\begin{table}[htbp]
\renewcommand{\arraystretch}{1.2}
  \centering
  \caption{Examples of strings generated by the Transformer on the Misspelled dataset in the experiments of \cref{generalization}.}
  \vskip 0.05in
  \label{appendix:tab:Misspelled}
  \begin{tabular}{ccc|c}
    \hline
     & Ground Truth & Prediction & Component Words\\ \hline
    $\checkmark$ & drwafcess.io      & drwafcess.io   & dry + access              \\
    $\checkmark$ & eglyiefd.jp       & eglyiefd.jp   & egg + yield                     \\
    $\checkmark$ & hopeynww.co       &  hopeynww.co  & honey + new    \\
    $\checkmark$ & joqpct.jp         &    joqpct.jp   & job + act        \\ 
    $\checkmark$ & yelloqnobe.io     &   yelloqnobe.io & yellow + note         \\ \hline
    $\times$     &  acrfreedo\textcolor{red}{z}.io  & acrfreedo\textcolor{red}{m}.io & act + freedom\\
    $\times$     & domai\textcolor{red}{j}protoc\textcolor{red}{j}l.me & domai\textcolor{red}{n}protoc\textcolor{red}{o}l.me & domain + protocol \\
    $\times$     & fiameupstre\textcolor{red}{r}m.site  & fiameupstre\textcolor{red}{a}m.site & flame + upstream\\
    $\times$     & gho\textcolor{red}{c}twnreless.fr &  gho\textcolor{red}{s}twnreless.fr & ghost + wireless\\ 
    $\times$     & ordbrow\textcolor{red}{q}er.cc  & ordbrow\textcolor{red}{s}er.cc & old + browser \\ \hline
  \end{tabular}
\end{table}
\setlength{\tabcolsep}{6pt}

\section{Transformer Performance under Severe Corruption} \label{appendix:severe}
In this section, we demonstrate the Transformer's success under severe corruption.
First, we present examples in which the Transformer successfully decodes heavily corrupted inputs. 
Recall that in this study, flip errors and burst errors were artificially simulated.
\cref{appendix:sample_flip,appendix:sample_burst} shows cases for both flip and burst errors where the Transformer succeeds in decoding while the pyzbar fails to read the code.
In each case, the corruption causes a significant deviation from the original QR code, preventing pyzbar from decoding it. However, the Transformer successfully decodes these corrupted codes.
The readers are recommended to try a QR code scan using their smartphone.

\begin{figure}[ht]
    \centering
    \includegraphics[width=\linewidth]{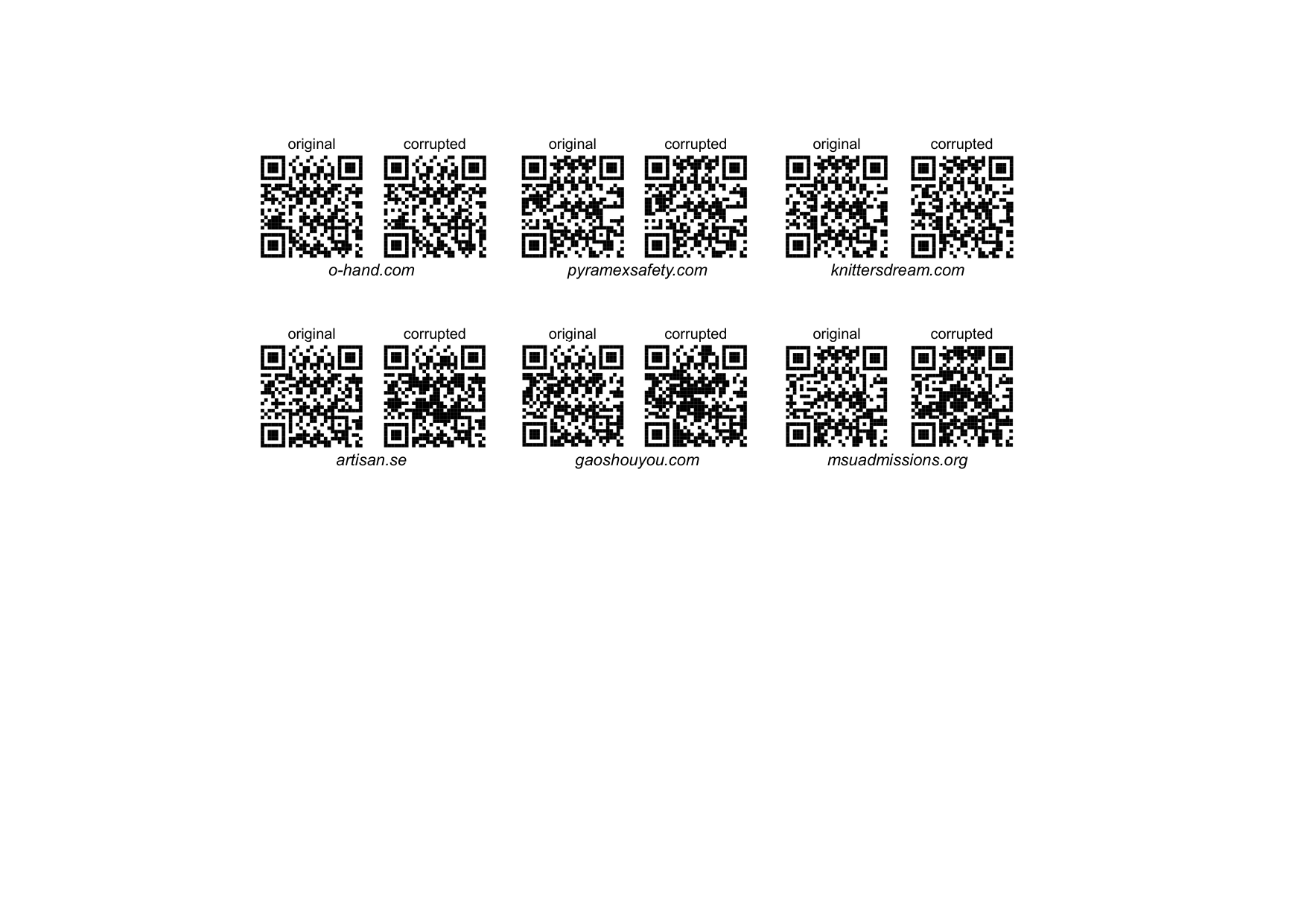}
    \vspace{-15pt}
    \caption{Examples where the Transformer successfully decodes QR codes after 20 flip errors.
``original'' shows the QR code before corruption, and ``corrupted'' shows the version after applying the errors.
While pyzbar fails to decode the corrupted QR code, the Transformer can decode it.}
    \label{appendix:sample_flip}
\end{figure}

\vspace{5pt}  

\begin{figure}[ht]
    \centering
    \includegraphics[width=\linewidth]{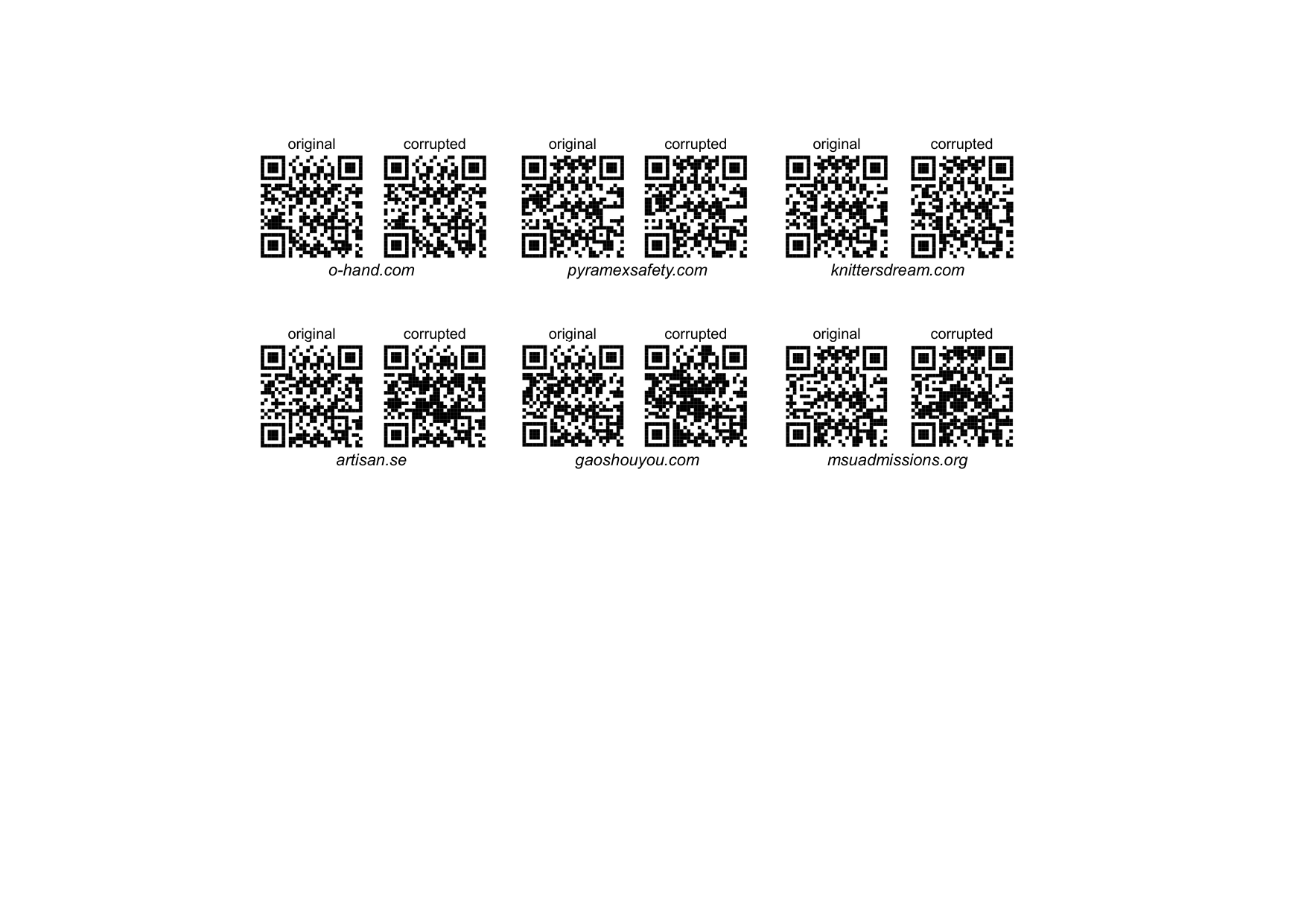}
    \vspace{-15pt}
    \caption{Examples where the Transformer successfully decodes QR codes after 10 burst errors.
``original'' shows the QR code before corruption, and ``corrupted'' shows the version after applying the errors.
While pyzbar fails to decode the corrupted QR code, the Transformer can decode it.}
    \label{appendix:sample_burst}
\end{figure}

Next, we examine the similarity between the plain text and the strings generated by the Transformer under severe corruption. 
Similarity was computed as the Levenshtein distance~\cite{Levenshtein} normalized by the maximum string length, as follows:
\begin{align}\label{appendix:similarity}
\mathrm{Similarity}(a, b) = 1-\frac{\mathrm{Levenshtein}(a,b)}{\mathrm{max}(|a|, |b|)}.
\end{align}
Here, $a$ and $b$ denote the two strings being compared.
\cref{appendix:fig:similarity} illustrates the distributions of similarity scores for the generated strings compared to the plain text when subjected to 20 flip errors and 10 burst errors, respectively.
In both (a) and (b), the Transformer produces a large proportion of strings with over 80\,\% similarity to the plain text, even under severe corruption. 
Moreover, applying data augmentation substantially increases these similarity scores. 
Therefore, the Transformer can generate highly similar outputs despite heavy damage, and its performance can be further enhanced through data augmentation. 
For reference, \cref{appendix:tab:string_similarity_examples} presents the correspondence between similarity scores and the actual generated strings.

\begin{figure}[ht]
    \centering
    \includegraphics[width=\linewidth]{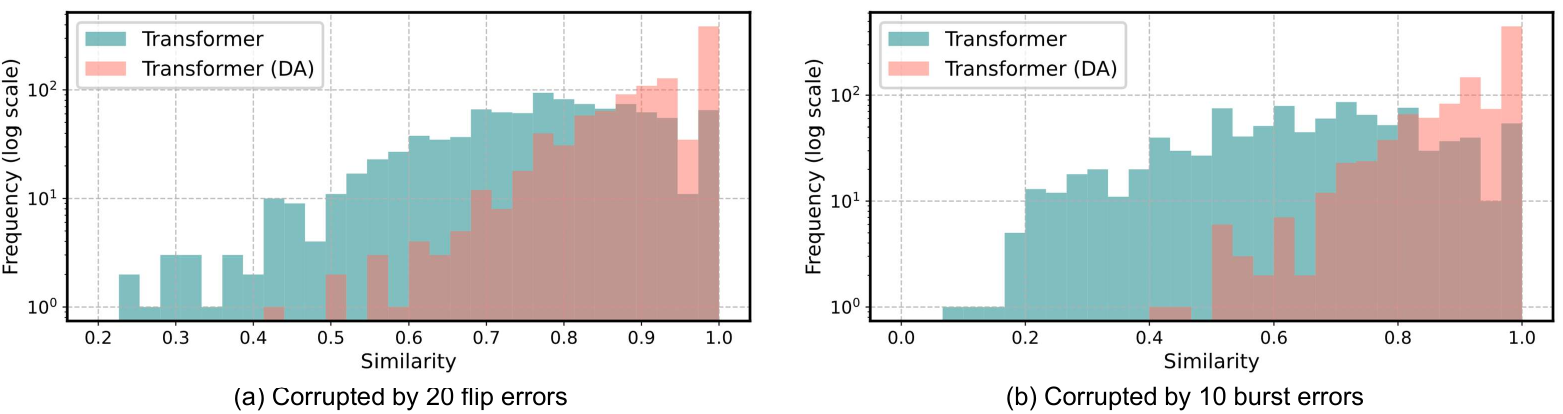}
    \vspace{-15pt}
    \caption{Distributions of similarity scores for the generated strings compared to the plain text when subjected to (a) 20 flip errors and (b) 10 burst errors. }
    \label{appendix:fig:similarity}
\end{figure}

\begin{table}[ht]
    \centering
    \caption{Examples of strings and their similarity scores.}
    \vskip 0.05in
    \label{appendix:tab:string_similarity_examples}
    \begin{tabular}{c*{6}{c}}
        \midrule
        Strings   & \texttt{Transformer} & \texttt{Transformey} & \texttt{Transforcel} & \texttt{Transfuimur} & \texttt{Tranyfjjber} & \\
        Similarity & 1.00 & 0.91 & 0.82  & 0.73 & 0.64     & \\
        \midrule
    \texttt{pcavsfzumec}  & \texttt{buagyfirchr}  & \texttt{Trpbgfbildv}  & \texttt{xwbiliorblq}  & \texttt{bpqzcvprwvh}  & \texttt{bxxggbggpjx} \\
    0.45  & 0.36    & 0.27  & 0.18  & 0.09  & 0.00 \\
        \bottomrule
    \end{tabular}
\end{table}

\section{Sensitivity of Decoding QR Code} \label{appendix:sensitivity}
In this section, we evaluate the sensitivity of QR code decoding.
The sensitivity of a function is defined by how much the output changes in response to variations in the input.
In this experiment, we examine how many bits in a QR code must be altered to produce a one-character change in the expected output (i.e., the plain text).
We prepare a set of domain names and generate 1,000 samples by randomly modifying a single character in each (e.g., changing ``example.com'' to ``exomple.com'').
We then convert both the original and modified strings into QR codes.
Finally, we count the number of differing bits between the original and modified QR codes and compute the average.

\cref{appendix:tab:sensitivity} shows the average number of bit changes resulting from a one-character change in the plain text for QR code Version 1 through Version 3. For Version 1 and Version 2, the number of bit changes required to alter the output increases with the error correction level. This suggests that the decoding of QR codes in Version 1 and Version 2 becomes less sensitive as the error correction level increases. On the other hand, Version 3 does not follow this trend. This is attributed to differences in the encoding schemes used in each version. In Version 3 and later versions, the data codewords are divided into two or more blocks, and each block is independently encoded with Reed–Solomon codes to generate corresponding error correction codewords. Because Version 3 and later versions perform encoding on split data blocks, the function sensitivity does not necessarily decrease with higher error correction levels. Given these factors, Version 1 and Version 2 are more suitable for examining how function sensitivity impacts the behavior of Transformers.

\begin{table}[htbp]
\setlength{\tabcolsep}{8pt}
\renewcommand{\arraystretch}{1.3}
\centering
\caption{Average number of bit changes caused by a single-character change in the plain text.}
\label{appendix:tab:sensitivity}
\vskip 0.05in
\begin{tabular}{ccccc}
\hline
Version & L & M & Q & H \\
\hline
1 & 30.3 & 42.0 & 54.1 & 69.2 \\
2 & 43.4 & 66.7 & 90.2 & 115.4 \\
3 & 62.1 & 106.5 & 74.3 & 92.1 \\
\hline
\end{tabular}
\end{table}

\section{Sensitivity Analysis Across QR Code's Four Error-Correction Levels}
In this section, we investigate the Transformer's behavior by training it to decode QR codes at four different error-correction levels, each corresponding to a distinct sensitivity. 

\cref{appendix:fig:graph_ce}(a) shows the success rate of the Transformers under flip errors. The results showed that there were almost no differences in success rate between error correction levels. For comparison, \cref{appendix:fig:graph_ce}(b) shows the success rate of Transformer pyzbar. It is clear that robustness against corruption varies significantly with the error-correction level. The performance gap between the Transformer and pyzbar likely arises because the Transformer focuses exclusively on the data codewords that carry the essential plain text information and thus ignores the error correction codewords.

\begin{figure}[ht]
    \centering
    \includegraphics[width=\linewidth]{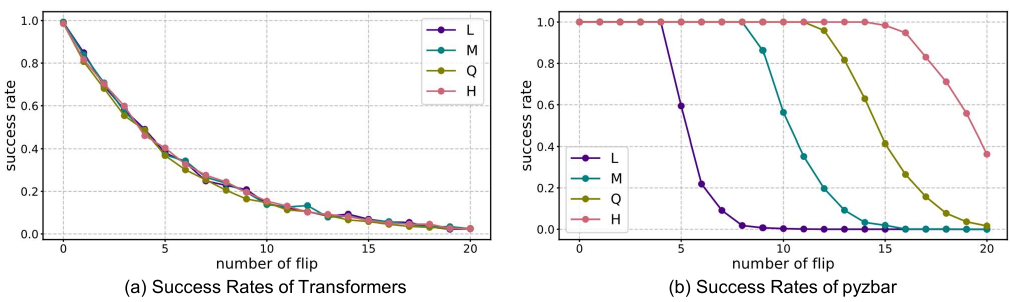}
    \vspace{-15pt}
    \caption{Success rates under flip errors for Transformer (a) and pyzbar (b), evaluated on v2-QR codes with all four error correction levels. The Transformer shows almost no difference in success rates across the error correction levels, whereas the pyzbar exhibits substantial variation depending on the level.}
    \label{appendix:fig:graph_ce}
\end{figure}

\section{Success Rate of Error Correction in the Encoding Region} \label{appendix:success_rate}

Here, we derive the success rate of error correction in QR codes with $n$-bit errors. We assume that a bit flip occurs uniformly at random in the encoding region. 
The encoding region consists of the data and error-correction codewords, the format information, and the remainder bits. Note that the remainder bits are not used in the decoding; thus, any errors in those bits do not affect the result.

Let $N$ denote the total number of bits in the encoding region, and let $N_{\mathrm{d}}$, $N_{\mathrm{f}}$ and $N_{\mathrm{r}}$ be the numbers of bits in the data and error-correction codewords and the format information, remainder bits, respectively. Accordingly, the numbers of erroneous bits are denoted by $p$, $q$, and $n-p-q$, respectively. Then, the successful rate of error correction is given by the following.
\ECCProbability*
In the following, we provide an overview of the derivation of $P_{\mathrm{d}}(p)$ and $P_{\mathrm{f}}(q)$.

\paragraph{$\bm{P_{\mathrm{d}}(p)}$ - Success Rate in Data and Error-Correction Codewords.}
For the data and error-correction codewords, let $M$ denote the total number of codewords, and let $M_{\mathrm{ecc}}$ represent the number of error-correction codewords, which depends on the error correction level. 
Here, $M = N_\mathrm{d} / 8$.
According to the properties of Reed–Solomon codes, the maximum number of correctable codewords, denoted by $t$, is determined by $t = \lfloor M_\mathrm{ecc} / 2 \rfloor$.
Under these conditions, the probability of successful error correction for the data and error-correction codewords can be determined as follows: 
\ProbabilityData*
\begin{proof}
Let $K$ denote the number of codewords that contain at least 1-bit error. Since the Reed–Solomon decoder can correct up to $t$ codewords with errors, the Success Rate is given by:
\begin{align}\label{data_prob1}
\begin{split}
    P_{\mathrm{d}}(p)
    &= P(K \le t) \\
    &= \sum_{k = \lceil \frac{p}{8} \rceil}^{t}P(K=k) \\
    &= \frac{1}{\binom{N_{\mathrm{d}}}{n}}\sum_{k = \lceil \frac{p}{8} \rceil}^{t}|\{K=k\}|.
\end{split}
\end{align}
Let $S_k(p)$ denote the number of ways in which exactly $k$ distinct codewords each contain at least 1-bit error, given that there are $p$ bit errors in total. Then, under the condition that all $k$ selected codewords include at least one erroneous bit, the value of $S_k(p)$ is given by
\begin{align}\label{data_prob2}
| \{K=k\} | = \binom{M}{k}S_k(p).
\end{align}
Let $U$ be the set of all ways to choose $n$ error-bit positions out of the $8k$ bits of the $k$ selected codewords (so $|U|=\binom{8k}{p}$).
 Define $A_i \subseteq U $ as the subset of error assignments in which the $i$-th codeword contains no bit errors. Then, the number of assignments in which every codeword contains at least 1-bit error is given by:
\begin{align}\label{data_prob3}
\begin{split}
    S_k(p) 
    &= \bigg| \bigcap^{k}_{i=1} A^{c}_{i} \bigg|  \\
    &= |U| - \bigg| \bigcup^{k}_{i=1} A_{i} \bigg|. 
\end{split}
\end{align}
To evaluate the number of assignments in which every codeword contains at least one erroneous bit, we apply the principle of inclusion-exclusion, as formalized in ~\cref{lemma:PIE}. Specifically, applying it to the sets $ A_1, A_2, \dots, A_k \subseteq U $, we obtain:
\begin{align}\label{data_prob4}
\begin{split}
    \bigg| \bigcup^{k}_{i=1} A_{i} \bigg| 
    &= \sum_{j=1}^{k}(-1)^{j-1} \sum_{L \subseteq [k],|L|=j} \bigg| \bigcap_{l \in L } A_{l} \bigg| \\
\end{split}
\end{align}
The term $ \bigg| \bigcap_{l \in L } A_{l} \bigg| $ denotes the number of cases in which $ j $ codewords, selected from the $ k $ total codewords, are all error-free; thus,
\begin{align}\label{data_prob5}
\begin{split}
    \sum_{L \subseteq [k],|L|=j} \bigg| \bigcap_{l \in L } A_{l} \bigg|
    &= \binom{k}{j} \binom{8(k-j)}{p}.
\end{split}
\end{align}
From \cref{data_prob3,data_prob4,data_prob5}, it follows that
\begin{align}\label{data_prob6}
\begin{split}
    S_k(p) 
    &= |U| - \bigg| \bigcup^{k}_{i=1} A_{i} \bigg| \\
    &= \binom{8k}{p} - \sum_{j=1}^{k}(-1)^{j-1} \binom{k}{j} \binom{8(k-j)}{p} \\
    &= \binom{8k}{p} + \sum_{j=1}^{k}(-1)^{j} \binom{k}{j} \binom{8(k-j)}{p} \\
    &= \sum_{j=0}^{k}(-1)^{j} \binom{k}{j} \binom{8(k-j)}{p}. \\
\end{split}
\end{align}
Therefore, from \cref{data_prob1,data_prob2,data_prob6}, we obtain
\begin{align}\label{data_prob7}
\begin{split}
    P_{\mathrm{d}}(p)
    &= \frac{1}{\binom{N_{\mathrm{d}}}{p}}\sum_{k = \lceil \frac{p}{8} \rceil}^{t}|{K=k}| \\
    &= \frac{1}{\binom{N_{\mathrm{d}}}{p}}\sum_{k = \lceil \frac{p}{8} \rceil}^{t} \binom{M}{k} \sum_{j=0}^{k}(-1)^{j} \binom{k}{j} \binom{8(k-j)}{p}.
\end{split}
\end{align}
\end{proof}

\paragraph{$\bm{P_{\mathrm{f}}(q)}$ - Success Rate in Format Information.}
The format information is encoded into 15 bits using a $(15, 5)$ BCH code and placed in two separate locations in the QR code.
If error correction works in either one of the two, the format information can be correctly read.
The $(15, 5)$ BCH code can correct up to 3-bit errors.
Under these conditions, the probability that the format information is correctly recovered is given below:
\ProbabilityFormat*
\begin{proof}
When exactly $q$ bit‐errors occur in the format information, there are $q+1$ ways to distribute those errors across the two $(15,5)$ BCH codeword blocks. Decoding of the format information succeeds if at least one of the two blocks contains at most 3-bit errors. 
Labeling by $i$ and $j$ the numbers of errors in the first and second blocks respectively, the number of allocations that lead to successful decoding can be written as $|\{(i,j)\in\mathbb{N}_0^2 \mid i+j=q,\ \min(i,j)\le3\}|$.
Hence,
\begin{align}
P_{\mathrm{f}}(q)
&= \frac{\bigl|\{(i,j)\in\mathbb{N}_0^2 \mid i+j=q,\ \min(i,j)\le3\}\bigr|}{q+1}.
\end{align}
\end{proof}

\section{Principle of Inclusion–Exclusion} \label{appendix:PIE_proof}
\begin{restatable}[name=Principle of Inclusion–Exclusion]{lemma}{PIE}\label{lemma:PIE}
Let $ A_1, A_2, \dots, A_k $ be subsets of a finite set $ U $. Then, the cardinality of their union is given by:
\begin{align}
\left| \bigcup_{i=1}^{k} A_i \right| 
= \sum_{j=1}^{k} (-1)^{j-1} \sum_{\substack{L \subseteq [k] , |L| = j}} \left| \bigcap_{l \in L} A_l \right|,
\end{align}
where $ [k] = \{1, 2, \dots, k\} $.
\end{restatable}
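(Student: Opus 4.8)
The plan is to prove the identity by a double-counting (indicator-function) argument that tracks the contribution of each element of $U$ separately. Both sides are integers obtained by counting signed incidences, so it suffices to fix an arbitrary $x \in U$ and show that its net contribution to the right-hand side equals its contribution to the left-hand side, namely $1$ when $x \in \bigcup_{i=1}^{k} A_i$ and $0$ otherwise. Summing the resulting per-element equality over all $x \in U$ then yields the claim.

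First I would set $m = m(x) = \left| \{\, i \in [k] : x \in A_i \,\} \right|$, the number of sets containing $x$. If $m = 0$, then $x$ belongs to no intersection $\bigcap_{l \in L} A_l$ for any nonempty $L$, so it contributes $0$ to every term on the right, consistent with its absence from the union on the left. For the case $m \ge 1$, I would count, for each fixed $j$ with $1 \le j \le k$, how many index sets $L \subseteq [k]$ with $|L| = j$ satisfy $x \in \bigcap_{l \in L} A_l$. This holds exactly when every index of $L$ lies among the $m$ indices whose set contains $x$, so the count is $\binom{m}{j}$, with the convention $\binom{m}{j} = 0$ for $j > m$. Reorganizing the right-hand side by the size $j = |L|$ therefore turns the contribution of $x$ into $\sum_{j=1}^{k} (-1)^{j-1} \binom{m}{j}$.

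Finally I would evaluate this alternating binomial sum. By the binomial theorem, $\sum_{j=0}^{m} (-1)^{j} \binom{m}{j} = (1-1)^m = 0$ for $m \ge 1$; isolating the $j = 0$ term gives $\sum_{j=1}^{m} (-1)^{j} \binom{m}{j} = -1$, whence
\begin{align}
\sum_{j=1}^{k} (-1)^{j-1} \binom{m}{j}
= \sum_{j=1}^{m} (-1)^{j-1} \binom{m}{j}
= 1.
\end{align}
Thus every element of the union contributes exactly $1$ to the right-hand side and every element outside contributes $0$, which matches the cardinality counted on the left; summing over $x \in U$ completes the proof.

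This argument has no genuine obstacle: the only step requiring care is the bookkeeping where the sum over subsets $L$ is regrouped by $j = |L|$ and the incidence count is identified with $\binom{m}{j}$. An alternative route is induction on $k$ using $|A \cup B| = |A| + |B| - |A \cap B|$ together with a Pascal-type recurrence for the subset sums, but the indicator-function argument is shorter and avoids the notational overhead of tracking how the intersections split in the inductive step, so I would present that one.
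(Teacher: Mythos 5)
Your proposal is correct and follows essentially the same route as the paper's own proof: a per-element double-counting argument in which each $x \in U$ contributes $\binom{m}{j}$ terms of size $j$, and the alternating sum collapses to $1$ via the binomial identity $(1-1)^m = 0$. Your explicit handling of the $m = 0$ case and the convention $\binom{m}{j} = 0$ for $j > m$ is a minor tidiness improvement, but the argument is the same.
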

\begin{proof}
We prove the identity by counting how many times each element $ x \in U $ is counted on the right-hand side.
Fix an element $ x \in U $, and suppose that $ x $ belongs to exactly $ m $ of the sets $ A_1, A_2, \dots, A_k $. Without loss of generality, assume $ x \in A_1, A_2, \dots, A_m $, and $ x \notin A_{m+1}, \dots, A_k $.
On the right-hand side, $x$ contributes to each term of the form $\left| \bigcap_{l \in L} A_l \right|$, where $L \subseteq [k]$ and $x \in A_l$ for all $l \in L$. Since $x$ belongs to exactly $m$ of the $A_i$, the total number of such contributing terms is:
\begin{align} 
\begin{split}
    (-1)^{0}\binom{m}{1} + (-1)^{1}\binom{m}{2} + \cdots + (-1)^{m-1}\binom{m}{m}
= \sum_{j=1}^{m}(-1)^{j-1}\binom{m}{j}.
\end{split}
\end{align}
This sum can be evaluated using the binomial theorem:
\begin{align} \label{binomial}
\begin{split}
    0
    &= (1-1)^m \\
    &= \sum_{j=0}^{m} (-1)^j \binom{m}{j} \\
    &= 1 - \sum_{j=1}^m(-1)^{j-1}\binom{m}{j}.
\end{split}
\end{align}
From~\cref{binomial}, it follows that
\begin{align}
\begin{split}
    \sum_{j=1}^m(-1)^{j-1}\binom{m}{j}
    = 1.
\end{split}
\end{align}

Therefore, every $ x \in \bigcup_{i=1}^{k} A_i $ contributes exactly once to the right-hand side. Elements not in the union contribute zero. Hence, the right-hand side counts exactly the number of elements in the union.
\end{proof}

\end{document}